\newcommand{\bE}{\,\mathbb{E}}
\newcommand{\reals}{\mathbb{R}}
\newcommand{\tr}{{\mathrm{tr}}}
\newcommand{\diag}{{\mathrm{diag}}}
\newcommand{\Var}{\mathrm{Var}}
\newcommand{\cF}{{\mathcal F}}
\newcommand{\cX}{\mathcal{X}}
\newcommand{\cU}{\mathcal{U}}
\newcommand{\cB}{\mathcal{B}}
\newcommand{\cA}{\mathcal{A}}
\newcommand{\cH}{\mathcal{H}}
\newcommand{\cN}{\mathcal{N}}
\newcommand{\todo}[1]{{\color{red}\bf TODO: #1}}
\def\moverlay{\mathpalette\mov@rlay}
\def\mov@rlay#1#2{\leavevmode\vtop{%
   \baselineskip\z@skip \lineskiplimit-\maxdimen
   \ialign{\hfil$\m@th#1##$\hfil\cr#2\crcr}}}
\newcommand{\charfusion}[3][\mathord]{
    #1{\ifx#1\mathop\vphantom{#2}\fi
        \mathpalette\mov@rlay{#2\cr#3}
      }
    \ifx#1\mathop\expandafter\displaylimits\fi}
\newcommand{\conv}{\textrm{conv}}
\newcommand{\symps}[1]{\mathbb{S}^{#1}_{+}}
\newcommand{\inner}[1]{\langle #1 \rangle}
\newcommand{\finner}[1]{\langle #1 \rangle}
\newcommand{\fnorm}[1]{\| #1\|_F}
\newcommand{\snorm}[1]{\| #1\|_{\mathrm{sp}}}
\newcommand{\tnorm}[1]{\| #1\|_{\mathrm{tr}}}
\newcommand{\nnz}{\mathrm{nnz}}
\DeclareMathOperator*{\argmin}{argmin} 
\renewcommand{\eqref}[1]{Equation~(\ref{#1})}
\newcommand{\figref}[1]{Figure~\ref{#1}}
\newcommand{\secref}[1]{Section~\ref{#1}}
\newcommand{\appref}[1]{Appendix~\ref{#1}}
\newcommand{\thmref}[1]{Theorem~\ref{#1}}
\newcommand{\lemref}[1]{Lemma~\ref{#1}}
\newcommand{\corref}[1]{Corollary~\ref{#1}}
\newcommand{\algref}[1]{Algorithm~\ref{#1}}
\newcommand{\amsample}{m_a(d,k,r,\epsilon)}
\newcommand{\pmsample}{m_p(d,k,r,\epsilon)}
\newcommand{\err}{\textrm{err}}
\newcommand{\proj}{{\mathcal{P}}_k ^d}
\newcommand{\projConv}{{\mathcal{C}} _k ^d}
\newcommand{\sort}{\mathrm{sort}}
\begin{document}
\newlength\figureheight
\newlength\figurewidth

\title{Subspace Learning with Partial Information}

\author{\name Alon Gonen \email alongnn@cs.huji.ac.il \\
       \addr School of Computer Science and Engineering\\ The
        Hebrew University\\ Jerusalem, Israel
       \AND
       \name Dan Rosenbaum \email danrsm@cs.huji.ac.il \\
       \addr  School of Computer Science and Engineering\\ The
        Hebrew University\\ Jerusalem, Israel
        \AND
        Yonina C. Eldar \email yonina@ee.technion.ac.il \\
        \addr Department of Electrical Engineering\\
        Technion, Israel Institute of Technology\\
        Haifa, Israel
        \AND
        Shai Shalev-Shwartz \email shais@cs.huji.ac.il\\
        \addr School of Computer Science and Engineering\\ The
        Hebrew University\\ Jerusalem, Israel
}

\editor{Kevin Murphy}

\maketitle

\begin{abstract}
The goal of subspace learning is to find a $k$-dimensional subspace
of $\mathbb{R}^d$, such that the expected squared distance between
instance vectors and the subspace is as small as possible. In this
paper we study subspace learning in a
\emph{partial information} setting, in which the learner can only
observe $r \le d$ attributes from each instance vector. We propose
several efficient algorithms for this task, and analyze their sample
complexity.
\end{abstract}

\begin{keywords}
principal components analysis, budgeted learning, statistical learning, learning with partial information, learning theory
\end{keywords}

\section{Introduction}  \label{sec:intro}
Subspace learning is a dimensionality reduction technique in a
variety of applications such as face recognition~\citep{yang2004two}, image compression~\citep{du2007hyperspectral},
and document classification~\citep{papadimitriou1998latent}. Recently, there has been growing
interest in subspace learning from partially observed data (e.g.,~\citep{chen2013low,wang1998blind,chi2013petrels}). As motivation, consider the scenario of subspace learning from corrupted data. As discussed in \cite{chen2013low}, data corruption may cause some (or even most) of the attributes to be missing. Another typical scenario is subspace learning from multiple sources. Many applications (e.g., wireless sensor networks~\citep{chi2013petrels}) rely on data which is collected from multiple sources (e.g., sensors). When the data dimension is high, it may be impossible or prohibitively expensive to collect every data entry from every source. Note that in the first scenario, we have no control over which attributes are missing, while in the second one a learner may actively choose which attributes to observe.

The subspace learning problem is formally defined as follows. Let $\cX$
be a subset of the Euclidean unit ball in $\reals^d$, and let $P$ be some
unknown distribution over $\cX$. Our goal is to find a rank-$k$
projection matrix $\Pi \in \reals^{d\times d}$ such that the expected
squared distance, $ \bE_{x \sim P}[ \|x - \Pi x\|_2^2]$, is as small
as possible. 

When $\cX$ is a finite set and $P$ is the uniform distribution over
$\cX$, the optimal solution to the subspace learning problem is given by the
Principal Component Analysis (PCA) algorithm, which returns the projection matrix that corresponds to the $k$ leading eigenvectors of the matrix $\frac{1}{|\cX|} \sum_{x \in \cX} xx^\top$. 
In the more general stochastic optimization setting of subspace
learning, $\cX$ is not restricted to be a finite set, 
$P$ is an arbitrary distribution over $\cX$, and the information given to the learner has the form of an i.i.d. training sequence  $(x_1,\ldots, x_m) \sim P^m$.

In the usual full information setting of subspace learning, the learner
has access to all attributes of the sampled vectors. In this paper, we
consider subspace learning in a partial information setting, in which
only a subset of indices from each vector can be observed. Inspired
by the two applications presented above, we study two variants of this problem, which will be named the \emph{passive} setting, and
the \emph{active} setting, respectively. In the
passive setting, we assume that each attribute is observed with
probability $p=r/d$ ($r \le d$). Therefore, the expected number of observed
attributes from each vector is $r$. In the active setting, the learner
can choose (possibly at random) $r$ attributes to be revealed. The \emph{sample complexity} of a subspace learning algorithm is
defined as the number of samples that are needed by the algorithm in order to find a projection matrix $\Pi$ with expected squared distance, $ \bE_{x \sim P}[ \|x - \Pi x\|_2^2]$, of at most $\epsilon$ more than the optimal expected squared distance. The sample complexity for each of the models is defined as the minimal sample complexity attained by any algorithm. In this paper we propose efficient algorithms for both settings and analyze their sample complexity. We also provide several
lower bounds on the sample complexity that can be attained by any algorithm.

\subsection{Our Contribution}  \label{sec:main}
Our first observation is that subspace learning (in both the passive
and active models) using $r=1$ attributes
is impossible. Consider the task of learning a $1$-dimensional
subspace (a line) in $\reals^2$ (see \figref{fig:impossibility}). Let
$u_1 = (\alpha,\alpha)~,~u_2 =  (-\alpha,\alpha)$. Denote by $P_1$ the uniform distribution over
$\{u_1,-u_1\}$, and by $P_2$ be the uniform distribution over
$\{u_2,-u_2\}$. Suppose that the actual distribution $P$ is chosen uniformly at random from $\{P_1,P_2\}$. The task of subspace learning in this case amounts to
distinguishing between $P_1$ and $P_2$. However, since each single coordinate is distributed
uniformly over $\{-\alpha,\alpha\}$, the learner does not obtain any
information from a single observation, and hence cannot identify the right subspace.
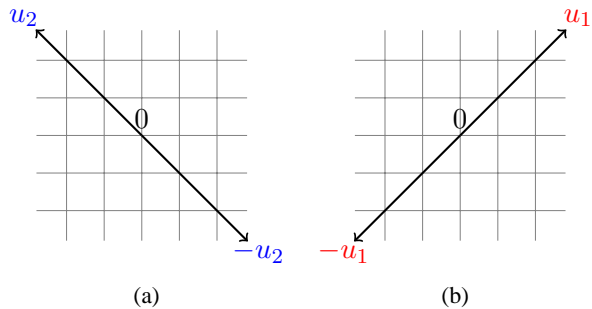
\begin{figure}[t]
\centering
\subfloat[]{
\begin{tikzpicture}[scale=1]
\draw[step=.5,help lines] (-1.4,-1.4) grid (1.4,1.4); 
\draw[thick][->] (0,0) -- (-1.41,1.41);
\draw[blue] (-1.57,1.57) node{$u_2$};
\draw[thick][->] (0,0) -- (1.41,-1.41);
\draw[blue] (1.55,-1.55) node{$-u_2$};
\draw[black] (0,0.22) node{$0$};
\end{tikzpicture}
}
\subfloat[]{
\begin{tikzpicture}[scale=1]
\draw[step=.5,help lines] (-1.4,-1.4) grid (1.4,1.4); 
\draw[thick][->] (0,0) -- (1.41,1.41);
\draw[red] (1.57,1.57) node{$u_1$};
\draw[thick][->] (0,0) -- (-1.41,-1.41);
\draw[red] (-1.55,-1.55) node{$-u_1$};
\draw[black] (0,0.22) node{$0$};
\end{tikzpicture}
}
\caption{Impossibility of subspace learning using $r=1$ observed attributes. The distribution of the observed attribute is identical for (a) and for (b), therefore they are indistinguishable.}
\label{fig:impossibility}
\end{figure}

For the case $2 \le r \le d$ we propose two efficient algorithms,
named Partially Observed PCA (\secref{sec:POPCA})
and Matrix Bandit Exponentiated Gradient (\secref{sec:MBEG}), which
are designed for the passive and active models, respectively. 

Partially Observed PCA (POPCA) is an extension of the PCA algorithm. Denote the
population covariance matrix $\bE[xx^\top]$ by $C$. The optimal
projection matrix, denoted $\Pi^\star$, is the projection onto the
subspace that is spanned by the $k$ leading eigenvectors of
$C$. An ERM (empirical risk minimizer, i.e., an algorithm that
minimizes the empirical loss) for the full information
setting is given by PCA which returns the projection
matrix corresponding to the $k$ leading eigenvectors of the matrix
$m^{-1}\sum_{i=1}^m x_i x_i^\top$ (whose expected value is
$C$). Similarly, the POPCA algorithm uses the random observations in
order to construct an estimate $\hat{C}$ of $C$, and approximates $\Pi^\star$ using the
projection matrix onto the $k$ leading eigenvectors of
$\hat{C}$. We analyze the sample complexity of this algorithm,
showing (see \corref{cor:POPCA}) that it is bounded from above by $(d/r)^2 \frac{k}{\epsilon^2}$. 
 
In the full information setting, the sample complexity is
known to be $O(k/\epsilon^2)$ (This bounds coincides with our bound
when $r=d$). Hence, the (multiplicative) price of
partial information, that is, how many more examples we need in order to
compensate for the lack of full information on each individual
example, is $O( (d/r)^2)$. It is interesting to understand whether a lower
price can be achieved. In \thmref{thm:lowerUniform} we prove that the sample complexity of every
algorithm in the passive model is $\Omega \left( (d/r)^2
\frac{k}{\epsilon^2} \right)$. The optimality of
POPCA in the passive mode is thus established. Another appealing property of POPCA is that, in terms of computational complexity, the challenge of partial information does not incur any additional cost; While the sample complexity grows as $r$ decreases, the runtime per iteration decreases by the same order.

Next, we investigate the active model and ask whether the price of
partial information can be reduced due to the ability of the learner
to actively choose the observed attributes. 
Intuitively, one may hope that the learning process would reveal
some useful information that can be utilized while choosing which coordinates to observe.
Our second algorithm, called Matrix Bandit Exponentiated Gradient (MBEG), exploits the
active setting by maintaining a ``weight matrix'' which is updated with every
observation, and induces a non-uniform attribute sampling
distribution. For MBEG, we derive an upper bound of $\max\left\{ 8k  \cdot \frac{d+r}{r} \cdot  \frac{k}{\epsilon^2}~,~
  \frac{d^4 r }{2(d+r)} \right\} \cdot \log(d)$ (see \thmref{thm:MBEG}) on the sample complexity. We note that if
$\epsilon$ is small enough, then the right term in the bound becomes
irrelevant, and thus a linear dependence on $d/r$ is obtained (for a
detailed comparison, see \secref{sec:comparison}). The (almost trivial)
fact that every subspace learner, even in the active model, must have a sample
complexity that grows linearly with $d/r$ is proved in
\appref{app:linearLower}. Hence, the dependence of MBEG on
$d/r$ is optimal, in the regime where $\epsilon$ is small.

The results in the active model immediately lead to the following question: Can we attain a linear price for partial information in the active model, independently of the required accuracy? We discuss possible directions for tackling this question in \secref{sec:discussion}.

\subsection{Related Work} \label{sec:related}
In the full information setting, it has been shown by
\cite{shawe2005eigenspectrum} and \cite{blanchard2007statistical}
that the optimal sample complexity of subspace learning is at most
$O(k/\epsilon^2)$, and this upper bound is achievable by applying
PCA on i.i.d. samples according to the distribution $P$. A similar
result is obtained by applying the Stochastic Gradient Descent
algorithm~\citep{arora2013stochastic}.

Subspace learning in the partial information setting has been studied
in \cite{chi2013petrels}, where an algorithm named PETRELS is proposed. However, no formal guarantees are derived for this
method. The setting in \cite{mitliagkas2014streaming} is similar
to our passive setting, but they assume that the distribution that
generates the instance vectors is Gaussian. 

A closely related problem to the task of subspace learning (in the passive setting) is the approximation of the covariance matrix using partially observed attributes. We discusses this relation in \secref{sec:POPCAanalysis}.

One possible way to tackle the challenge of subspace learning with partial
information is based on the matrix completion method. For example, we
may think of the partially observed examples as a data matrix with
unobserved entries. Then, one could first fill in the missing entries
using a matrix completion technique (e.g., as described in
\cite{candes2009exact}), and then apply PCA to the data matrix. We note that
this approach may work\footnote{Under some additional assumptions on the data such as
the incoherence assumption made in \cite{candes2009exact}} provided that the average number of observed attributes per example is sufficiently large. More precisely, according to the main result of \cite
{candes2009exact}, the number of observed attributes per example (column of the data matrix) should scale with the rank of the data matrix (which may be large as $d$). In contrast, our focus in this paper is on methods that work even when the number of observed attributes per example is much smaller (two attributes per instance suffice).

The active setting resembles the setting of the multi-armed bandit problem~\citep{auer2002nonstochastic}, in which the learner obtains limited feedback at each time, namely, it receives only the reward of the chosen arm. The challenge of learning linear predictors in $\reals^d$ with partially observed attributes (e.g., as in \cite{cesa2011efficient}) may be seen as an extension of this problem.  One of the most significant challenges in
this work is to adapt the technique used in the vector setting
(e.g., those employed by the Exp3 algorithm of \cite{auer2002nonstochastic}) to the corresponding matrix setting. We already observed one difference: While a
single arm suffices in the vector case, subspace
learning with $r=1$ attributes is impossible.

Our MBEG algorithm can be seen as an extension of the Online PCA
algorithm of \cite{warmuth2008randomized} (see also
\cite{nie2013online}) to the partial 
information setting. Similarly to their approach, MBEG maintains a weight matrix which induces a probability distribution over projection matrices. In MBEG, this weight matrix also induces a distribution over which attributes to observe.

\section{The Passive Setting} \label{sec:passive}
We begin by investigating the passive setting. We start by
describing an algorithm for this case. We then analyze its sample
complexity, and discuss its implementation.
\subsection{Partially Observed Principal Component Analysis (POPCA)}  \label{sec:POPCA}
In this section we describe the POPCA
algorithm. We start by reviewing the definition of the loss function and characterizing the minimizer of the loss. 

Denote the set of projection matrices from $\reals^d$ onto
$\reals^k$ by $\proj$. Since $\Pi^2 = \Pi$ for any $\Pi \in \proj$, the loss of a projection matrix $\Pi \in \proj$ can be expressed as
\begin{equation} \label{eq:loss}
L(\Pi) = \bE \|x-\Pi x\|_2^2 = 
\bE \left[\|x\|_2 ^2 - 2x^\top \Pi x + x^\top \Pi^\top \Pi x\right]
= \bE \left[\|x\|_2 ^2 - x^\top \Pi x \right] ~,
\end{equation}
Define the inner product of matrices $A,B$ by $\finner{A,B} =
\tr(A^\top B)$. We can further rewrite the loss as
\[
L(\Pi) = \bE \left[\|x\|_2 ^2 -
\finner{\Pi,xx^\top} \right] = \bE \left[\|x\|_2 ^2 \right] - \bE \left[
\finner{\Pi,xx^\top} \right] = \bE \left[\|x\|_2 ^2 \right] - 
\finner{\Pi,\bE [xx^\top]}~.
\]
Denote the covariance matrix $\bE[xx^\top]$ by $C$. Since $\|x\|_2^2$ does not depend on $\Pi$, the goal of subspace
learning is equivalent to finding a projection
matrix $\Pi$ such that
\begin{equation} \label{eq:obj1}
\finner{\Pi,-C} \le \min_{\Pi' \in \proj} \finner{\Pi',-C}
+ \epsilon ~.
\end{equation}
It is well-known that the rank-$k$ matrix which minimizes the expression $
\finner{\Pi',-C}$ is the matrix $\sum_{i=1}^k v_i v_i^\top$, where
$v_1,\ldots,v_k$ are the leading eigenvectors of $C$. The PCA approach
for subspace learning in the full information setting replaces $C$ with
$C(S) = \frac{1}{m} \sum_{i=1}^m x_i x_i^\top$, where $S =
(x_1,\ldots,x_m)$ is an i.i.d. training sequence drawn according to the
distribution $P$. Clearly, $\bE[C(S)] = C$. In our case, we will
construct an unbiased estimate of $C$ based on partially observed
examples, as detailed below. 

Consider an instance vector $x \sim P$ and let
$\hat{x}$ be the observed vector. According to our assumptions, for
each $i \in [d]$, the $i$-th coordinate of $\hat{x}$ satisfies
\begin{equation}  \label{eq:partial}
\hat{x}_i = \begin{cases} x_i & \textrm{w.p. } r/d \\ 0 &
  \textrm{w.p. } 1-r/d  ~. \end{cases}
\end{equation}
Denote $p=r/d$. Similarly to \cite{mitliagkas2014streaming}, we form the estimate
\begin{equation} \label{eq:oneEst}
\hat{C}_{\hat{x}} = \frac{1}{p^2} \hat{x} \hat{x}^\top
+\left(\frac{1}{p}-\frac{1}{p^2}\right) \diag(\hat{x}\hat{x}^\top)~.
\end{equation}
Indeed, it is easy to verify that $\hat{C}_{\hat{x}}$ forms an unbiased estimate of $C$:
\begin{align*}
\bE \left [\left(\frac{1}{p^2} \hat{x} \hat{x}^\top \right)_{i,j} \right] = \begin{cases}
\frac{1}{p} \bE [x_i^2] & i = j \\
\bE[x_ix_j]& i \neq j ~,
\end{cases} 
\end{align*}
and
\begin{align*}
\bE \left [\left(\frac{1}{p}-\frac{1}{p^2}\right) \diag(\hat{x}\hat{x}^\top)_{i,j} \right] = \begin{cases}
  \bE[x_i^2] - \frac{1}{p} \bE[x_i^2]& i = j \\ 0 & i \neq  j~. \end{cases}  
\end{align*}
Summing the corresponding entries, we see that $\bE[(\hat{C}_{\hat{x}})_{i,j}]
= \bE_{x \sim P} [x_ix_j] =  C_{i,j}$. Therefore, $\bE[\hat{C}_{\hat{x}}] =
C$. 

The POPCA algorithm (see \algref{alg:POPCA}) assumes access to $m$
i.i.d. vectors sampled according to $P$. For each instance vector,  it forms an estimate according to \ref{eq:oneEst}. By
averaging these $m$ estimates we obtain $\hat{C}$. The returned projection corresponds to the $k$ leading eigenvectors of $\hat{C}$.
\begin{algorithm}
\begin{algorithmic}
\caption{POPCA}
\label{alg:POPCA}
\STATE \textbf{Input: } $r,k \le d$ 
\STATE $\hat{C}=0 \in \reals^{d \times d}$
\FOR {$i=1$ \TO $m$}
\STATE Let $x_i \sim P$ and let $\hat{x}_i$ be the observed vector
\STATE $\hat{C}_{\hat{x}_i} = \frac{d^2}{r^2} \hat{x}_i \hat{x}_i^\top
+\left(\frac{d}{r}-\frac{d^2}{r^2}\right) \diag(\hat{x}_i\hat{x}_i^\top)$ 
\STATE $\hat{C}= \hat{C} + \frac{1}{m} \hat{C}_{\hat{x}_i}$
\ENDFOR
\STATE Compute the eigendecomposition $\hat{C}= \sum_{j=1} ^d \lambda_j v_j
v_j^\top$
\STATE Assuming $\lambda_1 \ge \ldots \ge \lambda_d$, return $\hat{\Pi} =
\sum_{j=1} ^k v_j v_j^\top$
\end{algorithmic}
\end{algorithm}

\subsection{Analysis of POPCA} \label{sec:POPCAanalysis}
The following lemma relates the success of \algref{alg:POPCA} to the quality of the estimation $\hat{C}$ of the covariance matrix $C$.
\begin{lemma} \label{lem:popcaReduceCov}
Suppose that the final estimate $\hat{C}$ of POPCA satisfies 
\begin{equation} \label{eq:popcaReduceCov}
\bE [\fnorm{C-\hat{C}}] \le \epsilon/\sqrt{k}~.
\end{equation}
Then, the resulting projection matrix $\hat{\Pi}$ satisfies the desired bound
\[
\bE [\inner{\hat{\Pi},-C}] \le \min_{\Pi' \in \proj} \inner{\Pi',-C}+\epsilon~.
\]
\end{lemma}
\begin{proof}
Using the Cauchy-Schwarz inequality, we get
\begin{align*} 
\bE [\sup_{\Pi \in \proj} \inner{\Pi,-C}-\inner{\Pi,-\hat{C}} ] &
\le  \bE [\sup_{\Pi \in \proj} \fnorm{\Pi} \fnorm{\hat{C}-C}]  \\&
\le \sup_{\Pi \in \proj} \fnorm{\Pi} \bE [\fnorm{\hat{C}-C}]  \\&
\le \sqrt{k} \cdot \epsilon/\sqrt{k}   \\& 
= \epsilon~.
\end{align*}
Consequently, since $\hat{\Pi}=\argmin_{\Pi \in \proj} \inner{\Pi,-\hat{C}}$, we have
\begin{align*}
\bE [\inner{\hat{\Pi},-C} -\inner{\Pi^\star,-C}] &= \bE
[\inner{\hat{\Pi},-C} -\inner{\Pi^\star,-\hat{C}}] \\&
\le \bE [\inner{\hat{\Pi},-\hat{C}} -\inner{\Pi^\star
,-\hat{C}}] + \epsilon\\&
\le \epsilon~,
\end{align*} 
which completes the proof.
\end{proof}
In view of \lemref{lem:popcaReduceCov}, obtaining upper bound on the sample complexity of POPCA reduces to analyzing the quality of the covariance estimation. A fairly vast body of literature exists on the latter task in the full-information scenario, and several results are known in the case of missing entries. For example, \cite{lounici2014high} considered a setting similar to our passive setting. Corollary 1 in \cite{lounici2014high} gives a bound on the Frobenius error that depends on the spectral decay of $C$.  We will derive a slightly different (worst-case) bound under the assumption that the instances are bounded in the Euclidean unit ball. A comparison between the two bounds is given \appref{app:lounici}.
\begin{lemma} \label{lem:covarianceConverge}
Let $\epsilon \in (0,1)$. 
If $m \ge \left \lceil (d/r)^2 \cdot \frac{k}{\epsilon^2}\right \rceil$, then 
\[
\bE [\fnorm{C-\hat{C}}] \le \epsilon/\sqrt{k}~.
\]
\end{lemma} 
\begin{proof}
Using Jensen's inequality, we have
\[
\bE[\|\hat{C}-C\|_F] = \bE[  ( \|\hat{C}-C\|_F^2)^{1/2} ] \le (\bE[
\|\hat{C}-C\|_F^2])^{1/2}~.
\]
Since the observations are i.i.d., 
\begin{align*}
\bE[ \|\hat{C}-C\|_F^2] &= 
\sum_{i,j} \Var[\hat{C}_{i,j}]  = \sum_{i,j}
\Var \left[(1/m) \sum_{q =1} ^m \hat{C}_{\hat{x}_q,i,j} \right] \\
&= \frac{1}{m}
\sum_{i,j} \Var[\hat{C}_{\hat{x}_1,i,j}] \le \frac{1}{m} \sum_{i,j} 
\bE[\hat{C}_{\hat{x}_1,i,j}^2] ~,
\end{align*}
where $\hat{C}_{\hat{x}_q,i,j}$ is the $[i,j]$-th entry of $\hat{C}_q$.
Denote $\hat{x} = \hat{x}_1$ and $x=x_1$ (subscript indices will now
correspond to the entries of these vectors). According to \ref{eq:oneEst},  
\[
\bE[\hat{C}_{\hat{x},i,j}^2|x]= \begin{cases} p^{-4} \bE[\hat{x}_i^2
  \hat{x}_j^2|x]  = p^{-2} x_i^2 x_j^2 & i \neq j \\ p^{-2}
  \bE[\hat{x}_i^4|x] = p^{-1} x_i^4  \le p^{-2} x_i^4 & i=j \end{cases}~.
\]
Therefore, since the $\ell_2$ norm of the instances is at most $1$, we have
\[
\sum_{i,j}  \bE[\hat{C}_{\hat{x}_1,i,j}^2|x] \le p^{-2} \sum_{i,j} x_i^2
x_j^2 = p^{-2} \|x\|^4 \le p^{-2} = \frac{d^2}{r^2}~,
\]
and $\sum_{i,j}  \bE[\hat{C}_{\hat{x}_1,i,j}^2] \le  \frac{d^2}{r^2}$.
Combining the above bounds, we obtain
\[
\bE[\|\hat{C}-C\|_F] \le \frac{1}{\sqrt{m}} \cdot \frac{d}{r}~.
\]
For $m \ge \left \lceil \frac{d^2}{r^2} \cdot \frac{k}{\epsilon^2} \right \rceil$, we arrive at the claimed bound.
\end{proof}
Let $\pmsample$ be the sample
complexity of subspace learning in the passive partial information
model, namely, how many examples are needed (for the optimal
learner) to guarantee that (\ref{eq:obj1}) holds. 
Based on \lemref{lem:covarianceConverge} and \lemref{lem:popcaReduceCov}, we now conclude the following bound on the sample
complexity. 
\begin{corollary} \label{cor:POPCA}
Using POPCA (\algref{alg:POPCA}), we have the following bound on the sample complexity for any integer $r \ge 2$:
\[
\pmsample \le \left \lceil  (d/r)^2  \cdot \frac{k}{\epsilon^2} \right \rceil~.
\]
\end{corollary} 

\subsection{Optimality of POPCA}
In this section we prove the following lower bound on the sample complexity of subspace learning with partial information in the passive model. 
\begin{theorem}  \label{thm:lowerUniform}
Assume that $k \le d/2$ and $\epsilon \in (0,1/128)$. The sample complexity in the passive model is at least $\Omega \left( (d/r)^2
\cdot \frac{k}{\epsilon^2} \right)$. Therefore, we have 
$$
\pmsample = \Theta \left((d/r)^2 \cdot \frac{k}{\epsilon^2} \right)~.
$$
\end{theorem}
Note that up to a constant factor, our lower bound coincides with the upper bound obtained by POPCA (\corref{cor:POPCA}). Therefore, \thmref{thm:lowerUniform} establishes the optimality of POPCA in the passive model. 

The proof of \thmref{thm:lowerUniform} is divided into two parts. First, in \thmref{thm:lowerFull} we prove that the sample complexity in the full-information setting is at least $\Omega(k/\epsilon^2)$. Then, we complete the proof of \thmref{thm:lowerUniform} by showing that the multiplicative price of partial information is at least $\Omega((d/r)^2)$.
\begin{theorem}  \label{thm:lowerFull}
Assume that $k \le d/2$ and let $\epsilon \in (0,1/128)$. The sample complexity of subspace learning
with full information is
bounded below by
$$
m(d,k,r=d,\epsilon) = \Omega(k/\epsilon^2)~.
$$ 
\end{theorem}
We now sketch the proof of \thmref{thm:lowerFull}. A detailed proof is provided in \appref{app:lowerFull}. 
\begin{proof} \textbf{(sketch)} \\
The idea is to reduce the problem of coin identification (see Section 5.2 in \cite{anthony2009neural}) to that of subspace learning.
Assume that $k \le d/2$ and $\epsilon \in (0,1/128)$. Let $\cU = \{u_j\}_{j=1}^{2k} \subseteq \reals^d$ be a set of $2k$ orthonormal vectors.
A  distribution over $\cU$  is defined as follows. First, we draw a
sequence $b=(b_1, \ldots,
b_k) \in \{-1,1\}^k$ uniformly at random. We associate the pair $\{u_i,
u_{i+k}\}$ with a Bernoulli random variable $B_i$ with parameter
$p_i = \frac{1 + b_i \alpha}{2}$, where $\alpha=16\epsilon$. To define a
distribution $P:=P_b$, we now describe the process of drawing an
instance $x \sim P$:
\begin{enumerate}
\item
An integer $i \in
[k]$ is chosen uniformly at random. 
\item
The $i$-th coin is flipped
(according to $p_i$). Denote the corresponding random variable by $Z$.
\item
If $Z = 1$, then $x$ is chosen uniformly at random from the set
$\{u_i,-u_i\}$. Otherwise ($Z = 0$), $x$ is chosen uniformly at random from the set
$\{u_{i+k},-u_{i+k}\}$.
\end{enumerate}
In \lemref{lem:subspaceToCoins} we show that a
successful subspace learner must identify the bias of ``most'' of the
coins. Thus, we can reduce $k$ independent tasks of coin
identification to the task of subspace learning. A well-known result in statistics \citep{anthony2009neural}[Lemma 5.1] tells us that
$\Omega(1/\alpha^2)$ samples are needed to identify a coin with
bias $\alpha$. Hence, each of the pairs must be observed
$\Omega(1/\epsilon^2)$ times.
\end{proof}
\begin{proof} \textbf{(of \thmref{thm:lowerUniform})}
Consider the construction presented in the proof (sketch) of \thmref{thm:lowerFull}. We next specify the set $\cU$ and prove that the price of
partial information in the passive model is
$\Omega(d^2/r^2)$. Consequently, this will conclude the proof of \thmref{thm:lowerUniform}. 

For every $i \in [k]$, let $u_i = \frac{\sqrt{2}}{2} (e_i+e_{i+k})$, $u_{i+k}
= \frac{\sqrt{2}}{2} (-e_i + e_{i+k})$. To specify a distribution $P$, fix a
vector $b=(b_1,\ldots,b_k) \in \{-1,1\}^{k}$. Consider now a single
interaction between the learner and the environment. A vector $x$ is
drawn according to $P$ as described in the proof (sketch) of \thmref{thm:lowerFull}. Let $i \in [k]$
denote the index of the coin which is associated with $x$. As we observed in \secref{sec:intro},
each of the coordinates $i$, $i+k$ is distributed uniformly over
$\{-\frac{\sqrt{2}}{2}, \frac{\sqrt{2}}{2}\}$. Hence, to obtain any
information, the learner must observe both of the coordinates $i$ and
$i+k$. The probability that both coordinates are observed is
at most $O(r^2/d^2)$. Hence, in expectation, (only) a single ``meaningful''
observation is obtained every $\Omega(d^2/r^2)$ iterations. Therefore, the
price of partial information is $\Omega(d^2/r^2)$.
\end{proof}

\subsection{Implementation of POPCA}
As we discussed above, when the data is fully visible, the sample
complexity of the PCA algorithm (which computes the $k$ leading eigenvectors of the empirical
covariance matrix, $m^{-1}\sum_{i=1} ^m x_i x_i^\top$, and returns the
corresponding projection matrix) is $m_f :=O(k/\epsilon^2)$. When the number
of samples, $m_f$, is larger than the dimension, a standard implementation
of this algorithm costs $O(m_fd^2)$. We next show that POPCA has a
similar runtime. 

We established above that POPCA requires a
training set of size $O((d/r)^2m_f)$. Consider a single iteration of
POPCA. Note that the construction of $\hat{C}_{\hat{x}_i}$ (see \ref{eq:oneEst}) costs $O(r^2)$. Therefore, it costs $O(m_fd^2)$ to obtain the average of the estimates, $\hat{C}$. The computation of the SVD of $\hat{C}$ costs $O(d^3)$. Therefore, when $m_f \ge d$, the overall
runtime of POPCA is indeed $O(m_fd^2)$.

\section{The Active Setting}  \label{sec:active}
We now consider the active model of subspace learning with
partial information. In particular, we will present and analyze the Matrix Bandit Exponentiated Gradient (MBEG)
algorithm. 

Before describing MBEG, it should be noticed that POPCA (along with its analysis) can be modified to fit the active model; An active version of POPCA simply selects the $r$ observed attributes uniformly at random (with replacement) and then proceeds similarly to POPCA\footnote{Note that here the attributes are no longer independent and therefore we should replace the weights in the definition of $\hat{C}_{\hat{x}}$.}. It is not hard to verify that the sample complexity of this algorithm is asymptotically equivalent to the sample complexity of POPCA. In particular, the price of partial information is quadratic in $d/r$. 

The main differences between MBEG
and POPCA are as follows:
\begin{enumerate}
\item
MBEG is designed for the active model. It employs a non-uniform
sampling method which gives higher
priority to ``stronger'' directions.
\item MBEG is an iterative algorithm which maintains a \emph{weight
matrix} that belongs to the convex hull of the set $\proj$ of
projection matrices. It can be thought of as a Bandit version of the extension of the Exponentiated Gradient (EG) algorithm  to matrices. The EG algorithm, and its extension to matrices are due to \cite{kivinen1997exponentiated}, and \cite{tsuda2005matrix}, respectively.
\end{enumerate}  

\subsection{ Matrix Bandit Exponentiated Gradient (MBEG)}  \label{sec:MBEG}

\subsubsection{Convexification}
In order to be able to apply the Matrix EG algorithm, we first need to formulate our
task as a convex optimization problem. Recall that our problem is 
equivalent to approximately minimizing the objective $\argmin_{\Pi \in \proj}
\finner{\Pi,-C}$, where $C = \bE[xx^\top]$. The objective is linear
and thus convex. We will replace the non-convex set $\proj$ with
its convex hull, $\projConv := \conv(\proj)$. Note that 
for every $W$ in $\projConv$, the gradient is given by $-C$. Therefore,
the EG procedure would start with some $W_1 \in
\projConv$, and, at iteration $i$, would update according to
\begin{align}  \label{eq:updateEG}
1.&~~~U_{i+1} =  \exp(\log(W_i) + \eta C)  \notag \\
2.&~~~W_{i+1} = \argmin_{W \in \projConv} D_R(W,U_{i+1}) ~,
\end{align}
where $D(R,U) = \tr(W \log W - W \log U- W + U)$ is the Bregman
divergence induced by the \emph{quantum entropy} regularizer, $R(W) = \tr(W
\log W - W)$. Additional details regarding this regualarizer can be
found in \cite{tsuda2005matrix} and \cite{warmuth2008randomized}. As in POPCA, 
the gradient $C$ is unknown but can be
estimated. We would like to
exploit the active setting, and therefore, we will
employ a non-uniform sampling method which relies on the current
weight matrix $W_t$ (see \secref{sec:nonUniform}). 

Next, we recall that the output of the algorithm
should be a projection matrix, while our algorithm maintains weight
matrices, which may not belong to the set $\proj$. Therefore, the
final step of the algorithm is to construct an element from $\proj$
that performs ``similarly'' to the average of the weight matrices
maintained during the run of the algorithm. For this purpose, we rely on the following lemma, due to
\cite{warmuth2008randomized}:
\begin{lemma} \label{lem:convChar}
Every matrix $W \in \projConv$ can be decomposed in
time $O(d^3)$ into a convex combination of at most $d$ elements from
$\proj$.  
\end{lemma}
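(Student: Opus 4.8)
The plan is to reduce the matrix statement to a statement about vectors via an eigendecomposition, and then to decompose the eigenvalue vector greedily (``water-filling'') into vertices of the hypersimplex. First I would record the elementary polytope description: every $\Pi\in\proj$ is symmetric and idempotent of rank $k$, so its eigenvalues lie in $\{0,1\}$, giving $0\preceq\Pi\preceq I$ and $\tr(\Pi)=k$; since $\{W\in\sym{d}: 0\preceq W\preceq I,\ \tr(W)=k\}$ is convex, $\projConv$ is contained in it. Hence it suffices to show that any symmetric $W$ with $0\preceq W\preceq I$ and $\tr(W)=k$ is a convex combination of at most $d$ members of $\proj$, produced in $O(d^3)$ time.

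Next I would reduce to vectors. Compute an eigendecomposition $W=U\diag(\lambda)U^\top$ with $U$ orthogonal and $\lambda\in[0,1]^d$, $\sum_i\lambda_i=k$; this is the $O(d^3)$ step. If $\lambda=\sum_j\alpha_j z^{(j)}$ is a convex combination in which each $z^{(j)}\in\{0,1\}^d$ has exactly $k$ ones, then $W=\sum_j\alpha_j\,U\diag(z^{(j)})U^\top$, and each $U\diag(z^{(j)})U^\top$ is symmetric, idempotent of rank $k$, i.e.\ lies in $\proj$. So the task becomes: decompose $\lambda$ into at most $d$ vertices of $\{z\in[0,1]^d:\sum_i z_i=k\}$.

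The core is the peeling procedure. Iterate the following. Let $S_1=\{i:\lambda_i=1\}$, $S_0=\{i:\lambda_i=0\}$, and $F=[d]\setminus(S_0\cup S_1)$. If $F=\emptyset$ then $\lambda$ is itself a vertex; output it with the remaining mass and stop. Otherwise note $\sum_{i\in F}\lambda_i=k-|S_1|$ is a positive integer strictly smaller than $|F|$ (so $|F|\ge2$); choose $T\subseteq F$ with $|T|=k-|S_1|$ (e.g.\ the largest fractional coordinates), let $z\in\{0,1\}^d$ be supported exactly on $S_1\cup T$, and take $\alpha\in(0,1)$ to be the largest value for which $\lambda':=\tfrac{\lambda-\alpha z}{1-\alpha}$ still lies in $[0,1]^d$. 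One checks $\alpha$ is well defined and strictly between $0$ and $1$, that $\sum_i\lambda'_i=k$, and that $\lambda'\in[0,1]^d$; output $z$ with coefficient $(\text{remaining mass})\cdot\alpha$, replace $\lambda$ by $\lambda'$, and continue. For termination and the count: a coordinate with $\lambda_i=0$ is excluded from every future $T$, so $z_i=0$ thereafter and it stays $0$; a coordinate with $\lambda_i=1$ is forced into $S_1$, so $z_i=1$ thereafter and it stays $1$. Maximality of $\alpha$ drives at least one fractional coordinate of $\lambda'$ to hit $0$ or $1$, so $|F|$ strictly decreases each round; since $|F|$ can never equal exactly $1$ (a lone fractional coordinate plus integers cannot sum to the integer $k$), it passes through at most the values $d,d-1,\dots,2,0$, yielding at most $d-1$ peeled vertices plus one final vertex, hence at most $d$ in all. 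Each round costs $O(d\log d)$ (a sort and a minimum) over $O(d)$ rounds, which is dominated by the eigendecomposition; returning the decomposition in the factored form $\bigl(U,\{(\alpha_j,z^{(j)})\}_{j\le d}\bigr)$, from which each $\Pi_j=U\diag(z^{(j)})U^\top$ is recovered and which is exactly what the algorithms invoking the lemma require, the total cost is $O(d^3)$.

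I expect the main obstacle to be the design of the peeling step itself: choosing $z$ so that the rescaled residual stays inside the polytope (which forces $z$ to agree with the current $0/1$ pattern) while guaranteeing that some coordinate is necessarily pinned to a boundary, and then shaving the count from the naive $d+1$ down to $d$ using the observation that the number of fractional coordinates is never exactly one. The remaining verifications — feasibility of the choice of $T$, the invariant $\sum_i\lambda_i=k$, and $\lambda'\in[0,1]^d$ — are routine.
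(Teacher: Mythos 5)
Your proposal is correct and follows essentially the same route as the paper: reduce to the eigenvalue vector and greedily peel off vertices of the hypersimplex $\{z\in\{0,1\}^d:\sum_i z_i=k\}$, pinning at least one coordinate to the boundary per round, which is exactly the Warmuth--Kuzmin procedure reproduced as \algref{alg:decomposition} in \appref{sec:decomposition} (your renormalized residual and your choice of $\alpha$ correspond to their $\min\{sk,\sum_j\lambda_j-lk\}$ in an unnormalized parametrization). The paper itself defers the correctness and the count of at most $d$ terms to the citation, whereas you supply the verification, including the observation that the number of fractional coordinates can never equal one.
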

For completeness, we recall the decomposition procedure of
\cite{warmuth2008randomized} in \appref{app:decomposition}. Getting
back to our algorithm, let $\hat{W} = \frac{1}{m} \sum_{i=1} ^m W_i$
be the average weight matrix, and denote by
$\hat{W} = \sum_{j=1} ^d \beta_j \Pi_j$ a decomposition of
$\hat{W}$ into a convex combination of elements from
$\proj$. The final step of MBEG sets the output matrix $\hat{\Pi}$ to be $\Pi_j$ with
probability $\beta_j$. This guarantees that the expected performance
of $\hat{\Pi}$ is the same as the performance of $\hat{W}$. 

\subsubsection{Prioritizing ``stronger'' directions}  \label{sec:nonUniform}
In this part we present the non-uniform sampling mechanism which is
employed by MBEG for attribute sampling. We first consider the case
$r=2$. Let $W : = W_i$ be a weight matrix obtained during the run of
MBEG. Recall\footnote{These properties clearly hold for
projection matrices, and thus hold for any convex combination of
projection matrices.} that $\sum_i
W_{i,i} = k$, and for every $i \in [d]$, $W_{i,i} \in
[0,1]$. Therefore, a natural distribution for attribute sampling is to choose each pair $(s,q) \in [d]^2$ 
with probability $p_{s,q} = \frac{W_{s,s}}{k} \cdot
\frac{W_{q,q}}{k}$. Unfortunately, we were not able to obtain a good bound
using this sampling technique. Instead, we pick attributes according to
\begin{equation} \label{eq:nonuniMBEG}
p_{s,q} =
(1-\alpha) \frac{W_{s,s} +W_{q,q} }{2dk} +  \frac{\alpha}{d^2}~,
\end{equation}
for some parameter $\alpha \in (0,1/2)$, which is tuned below. That is,
we mix a uniform distribution over $[d]^2$, with a
distribution which gives higher sampling probability to pairs for
which $W_{s,s},W_{q,q}$ are high, reflecting a bias toward sampling
from ``stronger'' directions. Mixing with the uniform
distribution guarantees that every pair has large enough probability
to be sampled, which will later help us ensure that we perform enough
``exploration''. 

Based on this probability distribution over pairs, we define an
unbiased estimate of the matrix $C$ by
\begin{equation} \label{eqn:hatCEG} 
\hat{C} =\frac{1}{2p_{s,q}}
  x_s x_q (E_{s,q}+E_{q,s})~,
\end{equation}
where $E_{s,q}$ is the all zeros matrix except $1$ in the $i,j$
coordinate. 

The extension of MBEG to any (even) $r>2$ is straightforward. We
simply pick $r/2$ independent estimates $\hat{C}_1,\ldots,
\hat{C}_{r/2}$, each of which is constructed as in the case $r=2$, and
set $\hat{C} = \frac{2}{r}
\sum_{j=1} ^{r/2} \hat{C}_j$. 

The algorithm is summarized in \algref{alg:MBEG}. As explained in \cite{tsuda2005matrix}, the projection step w.r.t. the Bregman divergence, $W_{i+1} = \argmin _{W \in \projConv} D_R (W,U_{i+1})$ can be performed in time $O(d^3)$. Overall, the running time per iteration is $O(d^3)$.

\begin{algorithm}
\begin{algorithmic}
\caption{Matrix Bandit Exponentiated Gradient}
\label{alg:MBEG}
\STATE \textbf{Input: } $r,k<d$ ($r \mod 2=0$)
\STATE $\eta=\sqrt{\frac{r\log (d) }{2m (d+r) }}$
\STATE $\alpha =  \eta d^2$ ~~(we assume that $m$ is large
enough so that $\alpha \le 1/2$)
\STATE $W_1 =  \frac{k}{d} I$
\FOR {$i=1$ \TO $m$}
\STATE denote $W=W_i$
\FOR {$j=1$ \TO $r/2$}
\STATE pick $(s,q) \in [d]^2$ with probability $p_{s,q} =
(1-\alpha) \frac{W_{s,s} +W_{q,q} }{2dk} + \frac{\alpha}{d^2}$
\STATE for $x \sim P$, let $(x_{s},x_{q})$ be the corresponding attributes
\STATE  $\hat{C}_{i,j} =
\frac{1}{2p_{s,q}}\,
  x_s x_q (E_{s,q}+E_{q,s})$ 
\ENDFOR
\STATE  $\hat{C}_i = \frac{2}{r} \sum_{j=1} ^{r/2} \hat{C}_{i,j}$
\STATE $U_{i+1} = \exp(\log(W)+ \eta \hat{C}_i)$
\STATE $W_{i+1} = \argmin _{W \in \projConv} D_R (W,U_{i+1})$
\ENDFOR
\STATE $\hat{W} = \frac{1}{m} \sum_{i=1} ^m W_i$
\STATE decompose $\hat{W}$ into $\hat{W} = \sum_{j=1} ^d \beta_j \Pi_j$ using \algref{alg:decomposition}
\STATE return $\hat{\Pi} = \Pi_j$ with probability $\beta_j$ 
\end{algorithmic}
\end{algorithm}

\subsubsection{Analysis of MBEG}
Let $\amsample$ be the sample
complexity of subspace learning in the active partial information
model. We denote the Frobenius norm, the spectral norm, and the trace norm
by $\fnorm{\cdot}$, $\snorm{\cdot}$, and $\tnorm{\cdot}$, respectively. Throughout this section we prove the following result.
\begin{theorem} \label{thm:MBEG}
Using MBEG (\algref{alg:MBEG}), we have the following bound on the sample complexity for any (even) integer $r \ge 2$:
\[
\amsample \le \max\left\{8k  \cdot \frac{d+r}{r} \cdot  \frac{k}{\epsilon^2}~,~
  \frac{2d^4 r }{d+r} \right\} \cdot \log(d)~.
\]
\end{theorem}
In order to prove \thmref{thm:MBEG}, we next apply the general analysis of Matrix EG~\citep{hazan2012near}[Theorem 13] to our case. 
\begin{theorem}    \label{thm:basicMEG}
Assume that the sequence $\hat{C}_1,
\ldots, \hat{C}_m$ obtained during the run of MBEG satisfies $\snorm{\hat{C}_i} \le 1/\eta$ for every $i \in [m]$. Then, for every $\Pi^\star \in \proj$,
\begin{equation}  \label{eq:basicMEG}
\sum_{i=1} ^m \finner{W_i-\Pi^\star ,-\hat{C}_i} \le \frac{k \log (d)}{\eta} +
\eta  \sum_{i=1} ^m \inner{W_i, \hat{C}_i ^2}~.
\end{equation}
\end{theorem}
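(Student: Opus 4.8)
The plan is to derive \eqref{eq:basicMEG} by the standard mirror-descent (follow-the-regularized-leader) analysis, specialized to the quantum-entropy regularizer $R(W)=\tr(W\log W - W)$, for which $\nabla R(W)=\log W$. With this identification, the first MBEG step $U_{i+1}=\exp(\log W_i+\eta\hat{C}_i)$ is precisely the mirror step $\nabla R(U_{i+1})=\nabla R(W_i)+\eta\hat{C}_i$, and the second step is the Bregman projection $W_{i+1}=\argmin_{W\in\projConv}D_R(W,U_{i+1})$. Note that each $\hat{C}_i$ is symmetric, so every matrix logarithm/exponential below is of a symmetric matrix.

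First I would apply the three-point identity for Bregman divergences, $D_R(X,Y)+D_R(Y,Z)-D_R(X,Z)=\finner{\nabla R(Z)-\nabla R(Y),\,X-Y}$, with $X=\Pi^\star$, $Y=W_i$, $Z=U_{i+1}$. Its right-hand side equals $\eta\finner{\hat{C}_i,\Pi^\star-W_i}=\eta\finner{W_i-\Pi^\star,-\hat{C}_i}$, yielding the per-round identity
\[
\eta\,\finner{W_i-\Pi^\star,-\hat{C}_i}=D_R(\Pi^\star,W_i)-D_R(\Pi^\star,U_{i+1})+D_R(W_i,U_{i+1})~.
\]
Next I would invoke the generalized Pythagorean inequality for the Bregman projection onto the convex set $\projConv$: since $\Pi^\star\in\proj\subseteq\projConv$, we have $D_R(\Pi^\star,U_{i+1})\ge D_R(\Pi^\star,W_{i+1})+D_R(W_{i+1},U_{i+1})\ge D_R(\Pi^\star,W_{i+1})$. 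Substituting this and summing over $i=1,\dots,m$, the terms $D_R(\Pi^\star,W_i)-D_R(\Pi^\star,W_{i+1})$ telescope, and since $D_R(\Pi^\star,W_{m+1})\ge 0$ we obtain
\[
\eta\sum_{i=1}^m\finner{W_i-\Pi^\star,-\hat{C}_i}\le D_R(\Pi^\star,W_1)+\sum_{i=1}^m D_R(W_i,U_{i+1})~.
\]

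It remains to bound the two right-hand terms. For the initialization term, $W_1=\tfrac{k}{d}I$ and $\Pi^\star$ is a rank-$k$ orthogonal projection, so $\tr\Pi^\star=k$ and $\tr(\Pi^\star\log\Pi^\star)=0$; a direct computation then gives $D_R(\Pi^\star,W_1)=k\log(d/k)\le k\log d$, which is the first term of \eqref{eq:basicMEG}. For the per-round divergence, substituting $\log U_{i+1}=\log W_i+\eta\hat{C}_i$ gives $D_R(W_i,U_{i+1})=\tr(U_{i+1})-\tr(W_i)-\eta\tr(W_i\hat{C}_i)$. Here I would use the Golden--Thompson inequality $\tr\exp(A+B)\le\tr(\exp(A)\exp(B))$ with $A=\log W_i$ and $B=\eta\hat{C}_i$ to get $\tr(U_{i+1})\le\tr\!\big(W_i\exp(\eta\hat{C}_i)\big)$, then the scalar bound $e^t\le 1+t+t^2$ for $|t|\le 1$---which, under the hypothesis $\eta\snorm{\hat{C}_i}\le 1$, lifts to the positive semidefinite ordering $\exp(\eta\hat{C}_i)\preceq I+\eta\hat{C}_i+\eta^2\hat{C}_i^2$---together with $W_i\succeq 0$ to conclude $\tr(U_{i+1})\le\tr(W_i)+\eta\tr(W_i\hat{C}_i)+\eta^2\inner{W_i,\hat{C}_i^2}$. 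Substituting back yields $D_R(W_i,U_{i+1})\le\eta^2\inner{W_i,\hat{C}_i^2}$. Plugging both bounds into the displayed telescoped inequality and dividing through by $\eta$ gives exactly \eqref{eq:basicMEG}.

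The main obstacle is the per-round stability bound $D_R(W_i,U_{i+1})\le\eta^2\inner{W_i,\hat{C}_i^2}$: this is where the hypothesis $\snorm{\hat{C}_i}\le 1/\eta$ is consumed, and it relies on the noncommutative Golden--Thompson inequality (the matrices $\log W_i$ and $\hat{C}_i$ need not commute) together with the monotonicity of $M\mapsto\tr(W_iM)$ on positive semidefinite matrices. A secondary technical nuisance is that matrices in $\projConv$ may be rank-deficient, so $\log W_i$ and hence the mirror/projection steps are only literally defined on the relative interior; this is handled by the usual perturbation/limiting argument (as in \cite{tsuda2005matrix,hazan2012near}) and I would not dwell on it. Everything else is bookkeeping with the trace inner product.
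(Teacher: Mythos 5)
Your proof is correct. The paper does not actually prove this theorem---it imports it verbatim from Hazan et al.\ [Theorem 13], which in turn follows Tsuda et al.---and your argument is precisely the standard one behind that citation: the three-point identity for the quantum-entropy Bregman divergence, the generalized Pythagorean inequality for the projection onto $\projConv$, Golden--Thompson, and the operator lift of $e^t\le 1+t+t^2$ under the hypothesis $\snorm{\hat{C}_i}\le 1/\eta$. As a small bonus, your computation of $D_R(\Pi^\star,W_1)$ yields the sharper constant $k\log(d/k)$, which is what the paper silently uses when it passes from \eqref{eq:basicMEG} to \eqref{eqn:PlugHere}.
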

The right-most term in \ref{eq:basicMEG} can be thought as the variance which is associated with the estimation process of MBEG. We now show that in contrast to POPCA, the variance
scales only linearly with $d/r$. 
\begin{lemma}  \label{lem:varMBEG}
For any matrix $W_i \in \projConv$ maintained by MBEG at time
$i$, denote the conditional expectation given $W_i$ by $\bE_i$. Then, 
\[
\bE_i \, \finner {W_i ,\hat{C}_i^2}  \le \frac{2k(d+r)}{r}~.
\]
\end{lemma}
We now prove the lemma while assuming that $r=2$. The extension
to any $r>2$ is detailed in \appref{app:mbegRg2}.
\begin{proof}
Let $W=W_i, \hat{C}=\hat{C}_i$. Then,
\begin{align*}
\bE_i \, \finner {W ,\hat{C}^2} &= \sum_{(s,q) \in [d]^2} p_{s,q} (W_{s,s}+W_{q,q})
\frac{1}{4p_{s,q}^2}  x_s^2 x_q^2  \\&
= \sum_{(s,q) \in [d]^2} 
\frac{(W_{s,s}+W_{q,q})}{4p_{s,q}}  x_s^2 x_q^2~.
\end{align*}
Since $\alpha \in (0,1/2)$, we have 
\[
\frac{W_{s,s}+W_{q,q}}{p_{s,q}} =
\frac{W_{s,s}+W_{q,q}}{(1-\alpha)\frac{W_{s,s} + W_{q,q}}{2dk} +
  \alpha/d^2} \le \frac{W_{s,s}+W_{q,q}}{(1-\alpha)\frac{W_{s,s} +
    W_{q,q}}{2dk}} \le 4dk~.
\]
Therefore, since $r=2$, we obtain
\begin{equation*} 
\bE_i \, \finner {W ,\hat{C}^2} \le dk \sum_{(s,q) \in [d]^2} 
 x_s^2 x_q^2 \le dk < \frac{2 k(d+r) }{r}~,
\end{equation*}
completing the proof of the lemma.
\end{proof}
\begin{proof} \textbf{(of \thmref{thm:MBEG})}
By definition of $\hat{C}_{i,j}$ we have that $\hat{C}_{i,j}^2 =
\hat{C}_{i,j} \hat{C}_{i,j}^\top$ is a diagonal matrix with all elements on
the diagonal equal to zero except the $(s,s)$ and $(q,q)$ elements,
which are both bounded above by $\frac{x_s^2x_q^2}{p_{s,q}^2}$. It follows that
\[
\snorm{\hat{C}_{i,j}} \le \frac{|x_sx_q|}{p_{s,q}} \le \frac{1}{p_{s,q}}
\le \frac{d^2}{\alpha} = \frac{1}{\eta} ~.
\]
Hence, 
\[
\snorm{\hat{C}_i} \le \frac{2}{r} \sum_{j=1} ^{r/2} \snorm{\hat{C}_{i,j}}
\le \frac{2}{r} \frac {r}{2} \frac{1}{\eta} = \frac{1}{\eta}~.
\]
Therefore, the conditions of \thmref{thm:basicMEG} hold. Taking expectation over \ref{eq:basicMEG}, we obtain
\begin{equation} \label{eqn:PlugHere}
\sum_{i=1} ^m \bE \, \finner{W_i-\Pi^\star
  ,-\hat{C}_i} \le \frac{k \log (d)}{\eta}   + \eta  \sum_{i=1}^m \, \bE \, \finner{W_i,
\hat{C}_i^2}~.
\end{equation}

Let $\bE_i$ denote the conditional expectation given $W_i$. Then,
$\bE_i [\hat{C}_i] = C$ and therefore, by the
law of total expectation, 
\begin{equation} \label{eqn:ToPlug2}
\bE ~\finner{W_i-\Pi^\star
  ,-\hat{C}_i}=\bE \finner{W_i-\Pi^\star ,-C} ~.
\end{equation}
Combining \ref{eqn:ToPlug2} with \lemref{lem:varMBEG}, and plugging
into \ref{eqn:PlugHere}, we obtain that
\[
\bE \sum_{i=1} ^m \finner{W_i-\Pi^\star ,-C} \le  \frac{k\log
    (d)}{\eta} + \eta m\frac{2k(d+r)}{r} ~.
\]
Dividing by $m$, denoting $\hat{W} =\frac{1}{m} \sum_{i=1} ^m  W_i$,
substituting $\eta=\sqrt{\frac{r\log (d) }{2(d+r) m }}$, 
rearranging terms, and observing that $\bE[\hat{\Pi}|\hat{W}] = \hat{W}$, we
have that 
\[
\bE~ \finner{\hat{\Pi}-\Pi^\star ,-C}  = \finner{\bE \,[\hat{W}]-\Pi^\star ,-C}  \le
\sqrt{\frac{8k^2(d+r)    \log (d)}{rm}} ~.
\]
The right-hand side of the above is smaller than $\epsilon$ if
$m \ge 8k \log(d)  \cdot \frac{d+r}{r} \cdot \frac{k}{\epsilon^2}$. Note, however, that we
also require that $m$ is large enough so that $\alpha \le
1/2$. Since $\alpha = \eta d^2$, it follows that $m$ should
also satisfy $m \ge \frac{2d^4 r \log(d)}{d+r}$. 
\end{proof}

\subsection{Comparison between the bounds in the passive and the
 active models}  \label{sec:comparison}
The left term in the bound of MBEG is smaller than the bound of POPCA if $\frac{d^2}{r^2} > 8 k \log (d) \cdot \frac{d+r}{r}$. 
The right term in the bound of MBEG is smaller than the bound of POPCA if 
$$\epsilon < \sqrt{\frac{(d+r)k}{2d^2 r^3\log(d)}}~.$$ 
Hence, if both of the conditions hold, then the
bound of MBEG is better than the bound of
POPCA. Also, if $\epsilon < \frac{2(d+r)k}{d^2r}$, then the
dependence of the sample complexity of MBEG on $d/r$ is linear. 

A reasonable regime in which MBEG
enjoys a linear price is when $r$ and $k$ are constants, and
$\epsilon$ is proportional to $1/d$. Note that in this regime, the
bound of POPCA scales with $d^4$, while the bound of MBEG scales
only with $d^3$ (in the full-information setting, the sample
complexity for this case scales with $d^2$). To summarize, ignoring the dependence
on $k$ (and logarithmic factors), MBEG attains the desired linear
price when $\epsilon$ is `small'. 

\section{Discussion}  \label{sec:discussion}
We introduced the problem of subspace learning with partial information, and considered both a passive and active model. Our first observation was that looking at a single coordinate does not give any information. Therefore, our algorithms look at the products of attribute pairs. Using the POPCA algorithm for the passive model, we showed that the sample complexity is tightly characterized by $\Theta( (d/r)^2 k / \epsilon^2 )$. Hence, the price of partial information in this case is quadratic.
For the active model we introduced the MBEG algorithm which exploits the gathered information in order to make a better choice over which attributes to observe. We showed that if the desired accuracy $\epsilon$ is small, then MBEG achieves a linear price. Since the expected number of observed attributes from each vector is $r$, we can not hope for a better dependence on $d/r$. At this point, a natural question arises: can we attain a linear price of partial information in the active model, independently of the required accuracy? We conclude with an observation regarding MBEG that provides a partial answer to this question.

Assume that $\epsilon$, $r$ and $k$ are constants. Examining the implementation of MBEG, we note that as long as the products, $x_s x_q$, between two consecutive observed attributes is zero, MBEG does not change the sampling distribution (which is initially uniform) nor its current estimation of the covariance matrix. Fix some attribute $j$ and consider the distribution $P_j$ which is concentrated on $e_j$. The probability that the product between two consecutive observed attributes is not zero is $(1/d)^2$. Since $r$ is constant, only zero products are observed for at least $\Omega(d^2)$ iterations, implying the same bound on the sample complexity.

The implication of this result is that in order to achieve a linear price for any value of  $\epsilon$, it is necessary to extract more information from the partial observations (e.g., take into account both the products of attribute pairs and the single attributes). Developing such algorithms or alternatively, tightening the lower bounds, is left for future research.

\acks{We thank the anonymous reviewers for their helpful comments. We also thank Amit Daniely for helpful discussions. This research has been supported by ISF no 1673/14.}


\newpage

\appendix
\section{Proof of \thmref{thm:lowerFull}}  \label{app:lowerFull}
In this section we prove \thmref{thm:lowerFull}.
\subsection{The adversarial distribution }
Let $\cU = \{u_j\}_{j=1}^{2k} \subseteq \reals^d$ be a set of $2k$ orthonormal vectors. In the proof sketch of \thmref{thm:lowerFull} we described a family $\cF$ of
distributions over the set $\{u,-u: u \in \cU\}$. Our lower bounds will be proved to hold in expectation when choosing
$P \in \cF$ at random. According to Yao's minimax principle, such a result implies that the lower bound
holds for some distribution in $\cF$.  

\subsection{A Successful Subspace Learner Is Also a Successful Coin identifier}
In this part we formalize the reduction from coin identification to subspace learning. As we sketched before, a key ingredient of our analysis of the lower bounds is the relation
between subspace learning and ``coin identification''.  This relation is formalized next. We
first need to introduce some additional notation. To specify the
distribution $P \in \cF$, fix a vector $(b_1,\ldots, b_k) \in
\{-1,1\}^k$. Let $\hat{\Pi} = \sum_{i=1}^k \hat{u}_i
\hat{u}_i^\top \in \proj$ (where $\hat{u}_1, \ldots, \hat{u}_k$ are
orthonormal). For every $(i,j) \in [k] \times [2k]$, define
$\theta_{i,j}=|\inner{\hat{u}_i, u_j}|$ to be the
\emph{covariance} between $\hat{u}_i$ and $u_j$. Next, for each $j \in [2k]$, define
$\theta_j^2 = \sum_{i=1}^k \theta_{i,j} ^2$. Also, define the set 
\begin{equation} \label{eq:indIdentifiedCoins}
J=\{j \in [k]: b_j=1 \wedge  \theta_j^2 > \theta_{j+k}^2 \} \cup \{j \in [k]: b_j=-1 \wedge  \theta_j^2 < \theta_{j+k}^2 \}~.
\end{equation}
For reasons that will become apparent shortly, we name $J$ the set of
identified coins. The following lemma asserts that a successful subspace learner must identify most of the coins.
\begin{lemma}  \label{lem:subspaceToCoins}
Let $\epsilon \in
(0,1)$. Assume that $L(\hat{\Pi}) - \min_{\Pi \in \proj} L(\Pi) \le  \epsilon$. Let
$J$ be defined as in \ref{eq:indIdentifiedCoins}.
Then, $|J|/k > 1- \frac{2 \epsilon}{\alpha}$.
\end{lemma}
\begin{proof}
Assume w.l.o.g. that $b_1=\ldots=b_k=1$. Note that
\[
\bE[x x^\top] = \frac{1}{k} \sum_{j=1} ^k \left(\frac{1+\alpha}{2} u_j
  u_j^\top+\frac{1-\alpha}{2} u_{j+k} u_{j+k} ^\top \right)~.
\]
The optimal projection is obtained by picking the largest eigenvectors
of $\bE[x x^\top]$. Precisely, the optimal projection matrix is
$\Pi^\star = \sum_{i=1} ^k u_i  u_i ^\top$. We assume the loss function as
formulated in \ref{eq:loss}. The loss
of $\hat{\Pi}$ is calculated as follows:
\begin{align}  \label{eq:theta}
\bE [\|x\|_2^2] -L (\hat{\Pi})&=\finner{\sum_{i=1}
^k \hat{u}_i \hat{u}_i ^\top, \bE[x x^\top] } \notag \\& 
= \frac{1}{k} \sum_{i=1} ^k \sum_{j=1} ^k
\left( \frac{1+\alpha}{2}\finner{\hat{u}_i \hat{u}_i ^\top,u_j u_j
  ^\top}+ \frac{1-\alpha}{2}\finner{\hat{u}_i \hat{u}_i ^\top,u_{j+k}
  u_{j+k} ^\top} \right) \notag \\&
= \frac{1}{k} \sum_{i=1} ^k \sum_{j=1} ^k
\left(\frac{(1+\alpha) \theta_{i,j}^2}{2} +\frac{(1-\alpha)
  \theta_{i,j+k}^2}{2} \right) \notag \\&
=  \frac{1}{k}  \sum_{j=1} ^k
\left(\frac{(1+\alpha) \theta_j^2}{2} +\frac{(1-\alpha)
   \theta_{j+k}^2}{2} \right) \notag \\&
= \frac{1}{2k}  \sum_{j=1} ^{2k} \theta_j^2
+\frac{\alpha }{2k}  \sum_{j=1} ^k (\theta_j^2-\theta_{j+k}^2)~.
\end{align}
Since $\{u_j\}_{j=1} ^{2k}$ is orthonormal, for each $i \in [k]$ we
have $\sum_{j=1} ^{2k}
\theta_{i,j}^2 \leq 1$. Hence, $\sum_{j=1} ^{2k} \theta_j^2 \le k$, so that the second-to-last term of \ref{eq:theta} is at
most $\frac{1}{2}$. This value is attained by $\Pi^\star$,
and for simplicity, we will assume that is attained by
$\hat{\Pi}$ as well.  For the same reasons, for each $i \in [k]$,
we have $\sum_{j=1} ^k  (\theta_{i,j}^2-\theta_{i,j+k}^2) \leq 1$. Hence,
$\sum_{j=1} ^{k} (\theta_j^2 -\theta_{j+k}^2) \le k$, so that the last term 
of \ref{eq:theta} is at most $\alpha /2$. Once again, this
value is attained by $\Pi^\star$. Our
last observation is that if the $j$-th coin is not identified, i.e.,
$j \notin J$, then $\theta_j^2-\theta_{j+k}^2
\leq 0$. Combining these observations, we obtain that
\[
L(\hat{\Pi})-L(\Pi^\star) \geq  \alpha  (1-|J|/k)/2~.
\]
The proof is completed by combining the assumption that
$L(\hat{\Pi})-L(\Pi^\star) \le \epsilon$.
\end{proof}

\subsubsection{Lower Bound on Coin Identification}
We previously informally argued that $\Omega(1/\alpha^2)$ samples are needed to
identify a coin with bias $\frac{1 \pm \alpha}{2}$. If we have $k$
such independent coins, then $\Omega(k/\alpha^2)$ are needed. Let us
formalize this result.

A coin identification problem with parameter $\alpha$ is defined as follows.
Consider a binary classification problem with a domain $[k]$ and label
set $\{0,1\}$. The
hypothesis class is the set $\cH=\{0,1\}^{[k]}$. The underlying distribution over $[k] \times \{0,1\}$ is chosen at
random using the following mechanism. The marginal distribution over $[k]$ is uniform,
and the conditional probability over the label (coin) is determined by
$P(y=1|x=j)=\frac{1+b_j \alpha}{2}$, where each $b_j$ is an independent
Rademacher random variable (drawn in advance). We observe that this distribution is identical to the distribution defined (over a shattered set
of size $k$) in \cite[Theorem
5.2]{anthony2009neural}. 

As usual, an algorithm for coin identification obtains an i.i.d. training sequence $S$
according to $P$, and has to return a hypothesis $h \in \cH$. The generalization error of $h \in \cH$, denoted $\textrm{err}(h)$, is defined to be the probability that it
misclassifies a new generated point $(x,y)$, i.e.,
$\err(h)=P(h(x) \neq y)$. The next theorem follows from \cite[Theorem
5.2]{anthony2009neural}.
\begin{theorem}  \label{thm:coins}
Let $\cB$ be an algorithm for coin identification, and let
$\tilde{\epsilon} \in (0,1/64)$. Consider a coin identification problem with
$\alpha=8 \tilde{\epsilon}$. If $m<\frac{k}{8 \tilde{\epsilon}^2}$, then there exists
a distribution $P$ for which
$$
\bE_{S \sim P^m} \,\err(\cB(S))-\min_{h \in \cH} \err(h)> \tilde{\epsilon}~.
$$ 
\end{theorem}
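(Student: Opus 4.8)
The plan is to recognize \thmref{thm:coins} as the classical distribution‑dependent lower bound for agnostic PAC learning, specialized to the class $\cH=\{0,1\}^{[k]}$, and to quote it from \cite[Theorem~5.2]{anthony2009neural}. The first step is trivial: $\cH$ shatters its entire domain $[k]$, since every one of the $2^k$ labelings of $[k]$ is realized by some $h\in\cH$, so $\VC(\cH)=k$. The second step is to note, as already remarked in the paragraph preceding the theorem, that the coin‑identification distribution --- uniform marginal on $[k]$ and conditional label law $P(y=1\mid j)=\tfrac{1+b_j\alpha}{2}$ with independent Rademacher signs $b_1,\dots,b_k$ fixed in advance --- is verbatim the adversarial family that Anthony and Bartlett use, over a shattered set of size $k$, in the proof of their $\Omega(k/\tilde\epsilon^{2})$ lower bound. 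Applying their theorem with VC dimension $k$ and accuracy $\tilde\epsilon$ gives exactly the claimed statement in the claimed range of $\tilde\epsilon$, the ``in expectation'' form being read directly off their analysis of the coin‑identification core.

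For orientation I would also recall why the construction is hard, since this is the substance being invoked. One checks that $\min_{h\in\cH}\err(h)=\tfrac{1-\alpha}{2}$, attained by $h^\star_b(j)=\indct{b_j=1}$, and that for an arbitrary $h$, $\err(h)-\min_{h'}\err(h') = \alpha\cdot\tfrac1k\,\bigl|\{\,j:h(j)\neq h^\star_b(j)\,\}\bigr|$. Averaging over the random draw of $b$ --- the probabilistic method, so that a lower bound on the $b$‑average is attained by some fixed $b$, hence by some distribution $P_b$ --- the expected excess error of $\cB$ equals $\alpha\cdot\Pr[\cB(S)(j)\neq h^\star_b(j)]$ for a uniformly random coordinate $j$, so it suffices to show that $\cB$ misidentifies the majority label of a random coordinate with probability exceeding $\tilde\epsilon/\alpha=1/8$. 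Conditioning on the number $N_j\sim\mathrm{Bin}(m,1/k)$ of examples landing on coordinate $j$, the sign $b_j$ is independent of everything the learner observes apart from those $N_j$ flips, so $\cB$'s error at $j$ is at least the Bayes error of deciding $b_j$ from $\mathrm{Ber}\!\bigl(\tfrac{1+b_j\alpha}{2}\bigr)^{\otimes N_j}$ under a uniform prior on $b_j$. Since $\bE[N_j]=m/k=\Theta(1/\alpha^{2})$ below the threshold, a typical coordinate carries only $\Theta(1/\alpha^{2})$ samples; bounding the total‑variation distance between the two product Bernoulli laws through their KL divergence, which is $O(N_j\alpha^{2})$, or more sharply through a binomial anti‑concentration inequality, the per‑coordinate Bayes error stays bounded below by an absolute constant, and averaging over $N_j$ --- using convexity of the Bayes error as a function of $N_j$, so that Jensen costs nothing, or Markov's inequality at the price of a constant --- keeps it above $1/8$. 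This yields $\bE_b\bE_S[\err(\cB(S))]-\min_h\err(h)>\tilde\epsilon$, and the probabilistic method furnishes the witnessing distribution.

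The routine ingredients are the first two steps and the reduction just sketched; the only real work is quantitative, namely pinning down the absolute constants so that the threshold is exactly $m<\tfrac{k}{8\tilde\epsilon^{2}}$ over the range $\tilde\epsilon\in(0,1/64)$. This needs the sharp (Slud‑type) binomial tail estimate that appears inside \cite[Theorem~5.2]{anthony2009neural}, rather than the crude Pinsker bound sketched above, together with careful handling of the random allocation $N_1,\dots,N_k$; that is precisely why I would quote that theorem verbatim instead of reproving it here.
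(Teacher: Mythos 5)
Your proposal matches the paper's treatment: the paper gives no standalone proof of this theorem, but simply observes that the coin-identification distribution coincides with the adversarial distribution over a shattered set of size $k$ used in \cite[Theorem~5.2]{anthony2009neural} and imports that result verbatim, exactly as you do. Your additional sketch of the underlying reduction to per-coordinate Bayes error is a correct elaboration of what that cited theorem proves, but it is not needed here and is not reproduced in the paper.
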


\subsection{Concluding the Theorem}
We are now in position to complete the reduction from the task of coin
identification to the task of subspace learning, and consequently conclude
\thmref{thm:lowerFull}. 

Let $\cA$ be a subspace learner whose sample complexity is
$m(d,k,r=d,\epsilon)$. We describe an algorithm $\cB$ 
for coin identification (with parameter $\alpha$) which uses $\cA$
as a subroutine. To this end, we shall provide a (randomized) map between the input of
$\cB$ to the input of $\cA$. These inputs have the form of training
sequences, which will be denoted by $S_{\mathrm{subspace}}$ and
$S_{\mathrm{coins}}$, respectively.  For each $j \in [k]$, the
pair $(j,1)$ is associated with $u_j$ or $-u_j$ with equal
probability. The pair $(j,0)$
is associated with $u_{j+k}$ of $-u_{j+k}$ with equal probability. Clearly, the sequence provided
to $\cA$ is generated according to the distribution discussed in
the proof sketch of \thmref{thm:lowerFull}. Given an accuracy parameter
$\tilde{\epsilon} \in (0,1/64)$ for $\cB$, we will require accuracy $\epsilon = \tilde{\epsilon}/2$ from $\cA$. To complete the
reduction, we will specify how the output of $\cB$ is
determined using the output of $\cA$. 

For
each $j \in [k]$, the output hypothesis returned by $\cB$ is defined
by
\[
h(j)=\begin{cases}  1 & \theta_j > 
\theta_{j+k} \\ 0 & \textrm{otherwise}\end{cases}~.
\]
Denote the set of identified coins by
$J$ (as in \lemref{lem:subspaceToCoins}). Observe that each coin that
is not identified, adds $\alpha/k$ to the
relative error of $\cB$. It follows from \lemref{lem:subspaceToCoins} that 
\begin{align}  \label{eq:almostDoneCoins}
\textrm{err}(\cB(S_{\mathrm{coins}}))-\min_{h \in \cH} \err(h)&=\alpha
(1-|J|/k) \notag\\&
 \leq  2 \left(L(\cA(S_{\mathrm{subspace}}))-\min_{\Pi \in \proj} L(\Pi) \right)~.
\end{align}
If $m \ge m(d,k,r=d,\epsilon)$,
then the right-hand side of \ref{eq:almostDoneCoins} is at most $2\epsilon=\tilde{\epsilon}$.
The proof is completed by applying \thmref{thm:coins} (with
$\alpha = 8\tilde{\epsilon}$).

\section{Extending \lemref{lem:varMBEG} to $\mathbf{r>2}$}  \label{app:mbegRg2}
We will now extend the proof of \lemref{lem:varMBEG} to any (even)
$r>2$. Fix an iteration $i$, and denote $\hat{C} = \hat{C}_i, W = W_i$.
We have
\begin{align*}
\bE_i \inner{W,\hat{C}^2} &= \frac{4}{r^2} \left [\sum_{j=1} ^{r/2} \bE_i
\inner{W,\hat{C}_j^2} + \sum_{j \neq t} \bE_i \inner{W,\hat{C}_j
 \hat{C}_t} \right]~.
\end{align*}
From the case $r=2$, we already know that the term $\bE_i \inner{W,\hat{C}_j^2}$ is at most $dk$. Fix some pair $j \neq t$. Note that $\hat{C}_j\hat{C}_t
\neq 0$ if and only if (at least) one of the indices $s_j,q_j$ is
equal to one of the indices $s_t, q_t$. Hence,
\begin{align*}
\bE_i \inner{W,\hat{C}_j \hat{C}_t} &\le 2 \sum_{(s,q) \in [d]^2}
p_{s,q}^2 (W_{s,s} + W_{q,q}) \frac{x_s^2 x_q^2}{4 p_{s,q}^2} \\&
+ 4\sum_{\substack{(s,q,q') \in [d]^3: \\ q \neq q'}}
p_{s,q} p_{s,q'} W_{q,q'} \frac{x_s^2 |x_q| |x_q'|}{4 p_{s,q} p_{s,q'}}~.
\end{align*}
The first term can be bounded as follows:
\begin{align*}
2\sum_{(s,q) \in [d]^2} p_{s,q}^2 (W_{s,s} + W_{q,q}) \frac{x_s^2
  x_q^2}{4 p_{s,q}^2} &= \frac{2}{4} \sum_{(s,q) \in [d]^2}  (W_{s,s} + W_{q,q}) x_s^2
  x_q^2\\& 
= \frac{2}{4} \left \langle {(W_{s,s} + W_{q,q})_{(s,q) \in [d]^2}, (x_s^2
  x_q^2)_{(s,q) \in [d]^2}} \right \rangle\\&
\le \frac{2}{4} \|(W_{s,s} + W_{q,q})_{(s,q) \in [d]^2} \|_\infty \|(x_s^2
  x_q^2)_{(s,q) \in [d]^2} \|_1 \\&
\le  \frac{2}{4} \cdot 2 
= 1~.
\end{align*}
The second term can be bounded as:
\begin{align*}
4 \sum_{\substack{(s,q,q') \in [d]^3: \\ q \neq q'}} p_{s,q} p_{s,q'} W_{q,q'} \frac{x_s^2 |x_q|
  |x_{q'}|}{4 p_{s,q} p_{s,q'}}  &
\le 4\cdot \frac{1}{4} \left (\sum_{s \in [d]} x_s^2 \right) \left(\sum_{(q,q') \in [d]^2} |W_{q,q'}| \cdot |x_q
  x_{q'}| \right) \\&
\le  \tnorm{W} \snorm{xx^\top} \\&
\le   k ~.
\end{align*}
Combining the above, we obtain 
\begin{align*}
\bE_i \inner{W,\hat{C}^2} &= \frac{4}{r^2} \left [\frac{r}{2} d k +
  \frac{r}{2} \left(\frac{r}{2}-1 \right) (k+1) \right] \\&
\le \frac{2dk}{r} +k +1  \\&
\le \frac{2dk}{r} +2k \\&
= \frac{2k(d+r)}{r}~.
\end{align*}

\section{The Price of Partial Information is at Least Linear}  \label{app:linearLower}
\begin{theorem}
Let $k=1$. Then, the sample complexity of subspace learning is bounded
below by:
\[
m(d,k=1,r,\epsilon) \ge \Omega \left( \frac{d}{r \epsilon} \right)~.
\]
\end{theorem}

\begin{proof}
Let $\epsilon  \in (0,1/4)$, and let $\cA$ be a bandit subspace learner. For
each $s \in [d]$, we define a distribution $P_s$ as follows. The
zero vector is drawn with probability $1-c\epsilon$ (where $c>2$), and the
vector $e_s$ is drawn with probability $c\epsilon$.
Note that $\bE [xx^\top]=c \epsilon \, e_s e_s^\top$, and thus the
optimal projection is given by $\Pi^\star:=e_s e_s^\top$. 

We next show that  a successful
subspace learner must identify the distribution. Let $P_s$ be a concrete distribution. Denote by $\hat{\Pi} = \hat{u}
\hat{u}^\top$ the output of the learner. Define $\theta_s =
u_s^2$. It can be easily seen that if $L(\hat{\Pi})-L(\Pi^\star)<
\epsilon < c\epsilon/2$, then $\theta_s >
\theta_q$ for any $q \neq s$. That is, a successful subspace learner
must identify the index $s$. 

Next we observe that if the size of the sample is at most $o \left
(\frac{d}{r \epsilon} \right)$, then with a
non-negligible probability, all the observations made by the learner are equal to zero. It follows that there
exists a distribution $P_j$ for which, $\bE [ L(\hat{\Pi})-L(\Pi^\star)] > c \epsilon/2 > \epsilon$.
\end{proof}

\section{Decomposing Elements in $\projConv$ into a Convex
  Combination of Elements From $\proj$}  \label{app:decomposition}
In this part we detail the decomposition procedure mentioned in
\lemref{lem:convChar}. For a symmetric $d \times d$ matrix $A$, we denote its eigenvalues by $\lambda(A)_1 \geq \ldots \geq \lambda(A)_d$. For convenience, we define two subroutines. The procedure $\mathrm{sort}(x,s)$ returns a
set of $s$ indices, corresponding to the $s$ largest values of
$x$. Given a vector $x \in \reals^d$, the function $\diag:\reals^d \rightarrow
\reals^{d \times d}$ returns a
diagonal matrix $X$ with $X_{j,j} = x_j$.

\begin{algorithm}
\caption{Decomposition Procedure}  \label{alg:decomposition}
\begin{algorithmic}
\STATE \textbf{Input: } $W \in \projConv = \conv(\proj)$
\STATE perform eigendecomposition $W = U \diag(\lambda(A)) U^\top$
\STATE $\lambda := k^{-1} (\lambda(A)_1, \ldots, \lambda(A)_d)$ 
\STATE $i:=1$
\REPEAT
\STATE $J := \sort(\lambda,k)$
\STATE $c_i := k^{-1} \sum_{j \in J} e_j$
\STATE $s := \min_{j \in J} \lambda_j$
\STATE $l := \max_{j \in [d] \setminus J} \lambda_j$
\STATE $\beta_i := \min \left \{sk, \sum_{j=1} ^d
    \lambda_j - lk \right\}$
\STATE $\lambda := \lambda  - \beta_i c_i$
\STATE $\Pi_i := U \diag (kc_i) U^\top$
\STATE $i := i+1$
\UNTIL $\lambda=(0,\ldots,0)$
\STATE For each $j \in [i-1]$ choose $A = \Pi_j$ with probability $\beta_j$
\end{algorithmic}
\end{algorithm}

It has been proved by \cite{warmuth2008randomized} that the loop
inside \algref{alg:decomposition} repeats at most $d$ times, and
decomposes $W$ into a convex combination $\sum \beta_i \Pi_i$ of
elements from $\proj$.

\section{Covariance Estimation with Missing Entries} \label{app:lounici}
Corollary 1 in \cite{lounici2014high} provides bounds under the assumption that the instances are drawn from a sub-gaussian distribution (see assumption 1 in \cite{lounici2014high}). This assumption is weaker than our boundedness assumption. Translating the results to our notation yields:
\begin{lemma}
Given $m$ partial observations, let $\hat{C}$ be the unbiased estimation constructed by POPCA. Let $\lambda>0$ be a parameter.
\[
\tilde{C}: = \tilde{C(\lambda)} = \argmin_{S \in \symps{d}} \{\|S-\hat{C}\|_F^2+\lambda \|S\|_1\}~,
\]
where $\symps{d}$ is the set of symmetric positive semi-definite $d \times d$ matrices. Then, for a suitable choice of $\lambda$, with probability at least $1-1/(2d)$, we have
\[
\|\tilde{C} - C\|_F \le \sqrt{\inf_{S \in \symps{d}} \{\|S-C\|_F^2+c_1 \lambda^2_1(C) \cdot \frac{\tr(C)}{\lambda_1(C)} \cdot \frac{d^2}{r^2 m} \cdot \textrm{rank}(S) \cdot \log(2d)\}}~,
\]
where $c_1$ is a constant and $\lambda_1(C)$ is the leading eigenvalue of $C$. 
\end{lemma}
Substituting $m=\left \lceil (d/r)^2 \cdot \frac{k}{\epsilon^2}\right \rceil$ from \lemref{lem:covarianceConverge} into the above bound, we obtain that the RHS is at most 
\[
\sqrt{\inf_{S \in \symps{d}} \{\|S-C\|_F^2+ c_1 \lambda^2_1(C) \cdot \frac{\tr(C)}{\lambda_1(C)} \cdot \frac{\epsilon^2}{k} \cdot \textrm{rank}(S) \cdot \log(2d))\}}~.
\]
This bound is always worse than our bound (i.e., larger than $\epsilon/\sqrt{k}$).
\vskip 0.2in
\bibliography{bib}

\end{document}